\title{Mapping Images to Scene Graphs with Permutation-Invariant Structured
Prediction}
\author{
  Roei Herzig\thanks{Equal Contribution.}\\
  Tel Aviv University \\
  \texttt{roeiherzig@mail.tau.ac.il} \\
  \And
  Moshiko Raboh$^*$\\
  Tel Aviv University \\
  \texttt{mosheraboh@mail.tau.ac.il} \\
  \AND
  Gal Chechik \\
  Bar-Ilan University, NVIDIA Research \\
  \texttt{gal.chechik@biu.ac.il} \\
  \And
  Jonathan Berant \\
  Tel Aviv University, AI2 \\
  \texttt{joberant@cs.tau.ac.il} \\
  \And
  Amir Globerson \\
  Tel Aviv University \\
  \texttt{gamir@post.tau.ac.il} \\
}
\renewcommand{\xi}{{\xx}^{(m)}}
\newcommand{\cF}{\mathcal{F}}
\newcommand{\needcite}[1]{}
\newcommand{\be}{\begin{equation}}
\newcommand{\ee}{\end{equation}}
\newcommand{\benn}{\begin{equation*}}
\newcommand{\eenn}{\end{equation*}}
\newcommand{\bea}{\begin{eqnarray*}}
\newcommand{\eea}{\end{eqnarray*}}
\newcommand{\bean}{\begin{eqnarray}}
\newcommand{\eean}{\end{eqnarray}}
\newcommand{\ww}{\boldsymbol{w}} 
\newcommand{\xx}{\boldsymbol{x}} 
\newcommand{\yy}{\boldsymbol{y}} 
\renewcommand{\ss}{\boldsymbol{s}} 
\newcommand{\zz}{\boldsymbol{z}}
\newcommand{\glf}{{\mathcal{F}}}
\newcommand{\ve}{\zz}
\newcommand{\vescalar}{z}
\newcommand{\onehotvector}[1]{\mathbb{1}\left[#1\right]}
\newcommand{\ignore}[1]{}
\newcommand{\comment}[1]{}
\newcommand{\phiv}{\boldsymbol{\phi}}
\newcommand{\alphav}{\boldsymbol{\alpha}}
\newcommand{\polyring}[1]{\reals\left[x_1,\ldots,x_n\right]}
\newcommand{\triplet}[3]{$\big\langle$#1, #2, #3$\big\rangle $}
\definecolor{atomictangerine}{rgb}{0.8, 0.2, 0.1}
\definecolor{turq}{rgb}{0.0, 0.5, 0.5}
\definecolor{darkturq}{rgb}{0.0, 0.4, 0.4}
\definecolor{bright}{rgb}{0.8, 0.1, 0}
\definecolor{darkgray}{gray}{0.3}
\definecolor{mahogany}{rgb}{0.6, 0.05, 0.05}
\definecolor{pink}{rgb}{1,0.05,0.6}
\definecolor{myblue}{rgb}{0.3,0.05,0.9}
\newcommand\jb[1]{\textcolor{bright}{(\textbf{JB:} #1 )}}
\renewcommand{\eqref}[1]{Eq.~\ref{#1}}
\newcommand{\figref}[1]{Figure \ref{#1}}
\newcommand{\secref}[1]{Section \ref{#1}}
\newcommand{\reals}{\mathbb{R}}
\newtheorem{theorem}{Theorem}
\newtheorem{definition}{Definition}
\begin{document}

\maketitle

\begin{abstract}
Machine understanding of complex images is a key goal of artificial intelligence. One challenge underlying this task is that visual scenes contain multiple inter-related objects, and that global context plays an important role in interpreting the scene. 
A natural modeling framework for capturing such effects is structured prediction,
which optimizes over complex labels, while modeling within-label interactions. However, it is unclear what principles should guide 
the design of a structured prediction model that utilizes the power of deep learning components.
Here we propose a design principle for such architectures that follows from a natural requirement of permutation invariance. We prove a necessary and sufficient characterization for architectures that follow this invariance, and discuss its implication on model design. Finally, we show that the resulting model achieves new state-of-the-art results on the \textit{Visual Genome} scene-graph labeling benchmark, outperforming all recent approaches.

\comment{
Structured prediction is concerned with simultaneous prediction of multiple inter-dependent labels.
Classical methods like CRF achieve this by maximizing a score function over the set of possible label assignments. Recent extensions use neural networks to either implement the score function or in maximization.
\jb{`in maximization' is not clear}
This paper takes an alternative approach, using a neural network to generate the structured output directly, without going through an intermediate score function. 
\jb{it sounds like we are the first to take the alternative approach but I don't think that's true, since seq2seq for example fall under this approach.}
We take an axiomatic perspective and derive a desired property networks should exhibit, namely, invariance to certain kinds of input permutations. We then present a structural characterization for maintaining this invariance, which is provably both necessary and sufficient.
We then discuss graph-permutation invariant (GPI) architectures that satisfy this characterization and explain how they can be used for deep structured prediction. We evaluate our approach on the challenging problem of inferring a {\em scene graph} from an image, namely, predicting entities and their relations in the image. We obtain state-of-the-art results on the challenging Visual Genome benchmark, outperforming all recent approaches.
}
\end{abstract}
\section{Introduction}
\label{introduction}
Understanding the semantics of a complex visual scene is a fundamental problem in machine perception. It often requires recognizing multiple objects in a scene, together with their spatial and functional relations. The set of objects and relations is sometimes represented as a graph, connecting objects (nodes) with their relations (edges) and is known as a {\em scene graph} (Figure \ref{sg_example}). Scene graphs provide a compact representation of the semantics of an image, and can be useful for semantic-level interpretation and reasoning about a visual scene \cite{johnson2018image}. Scene-graph prediction is the problem of inferring the joint set of objects and their relations in a visual scene.

Since objects and relations are inter-dependent (e.g., a person and chair are more likely to be in relation ``sitting on'' than ``eating''), a scene graph predictor should capture this dependence in order to improve prediction accuracy. This goal is a special case of a more general problem, namely, inferring multiple inter-dependent labels, which is the research focus of the field of structured prediction. 
Structured prediction has attracted considerable attention because it applies to many learning problems and poses unique theoretical and algorithmic challenges \citep[e.g., see][]{belanger17a,chen2015learning,taskar03max}. It is therefore a natural approach for predicting scene graphs from images. 

\begin{figure}[t!]
	\begin{center}
        \includegraphics[width=\linewidth]{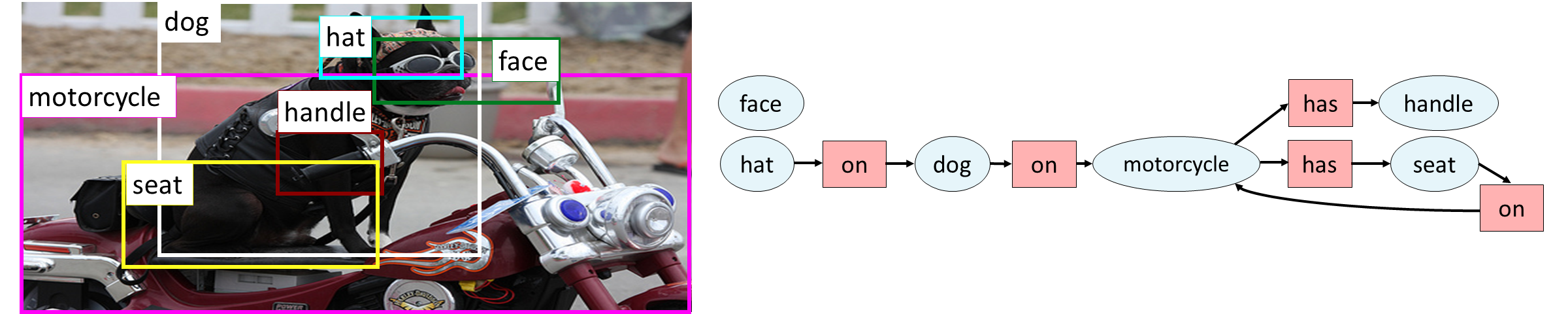}
	\caption{\small{
    An image and its scene graph from the Visual Genome dataset \citep{krishna2017visual}. The scene graph captures the entities in the image (nodes, blue circles) like \emph{dog} and their relations (edges, red circles) like \triplet{hat}{on}{dog}.}}
	\label{sg_example}
	\end{center}
\end{figure}

Structured prediction models typically define a score function $s(x,y)$ that quantifies how well a label assignment $y$ is compatible with an input $x$. In the case of understanding complex visual scenes, $x$ is an image, and $y$ is a complex label containing the labels of objects detected in an image and the labels of their relations. In this setup, the {\em inference task} amounts to finding the label that maximizes the compatibility score $y^* = \arg\max_y s(x,y)$. This score-based approach separates a scoring component -- implemented by a parametric model, from an optimization component -- aimed at finding a label that maximizes that score. Unfortunately, for a general scoring function $s(\cdot)$, the space of possible label assignments grows exponentially with input size. For instance, for scene graphs the set of possible object label assignments is too large even for relatively simple images, since the vocabulary of candidate objects may contain thousands of objects. As a result, inferring the label assignment that maximizes a scoring function is computationally hard in the general case. 

An alternative approach to score-based methods is to map an input $x$ to a structured output $y$ with a ``black box" neural network, without explicitly defining a score function. This raises a natural question: what is the right architecture for such a network? Here we take an axiomatic approach and argue that one important property such networks should satisfy is invariance to a particular type of input permutation. We then prove that this invariance is equivalent to imposing certain structural constraints on the architecture of the network, and describe architectures that satisfy these constraints. 


\ignore{After presenting our invariance assumptions and resulting structural characterization for neural architectures, we specify several models that have the corresponding structure. These indeed significantly extend the expressive power of current structured prediction approaches. Finally, we apply our approach to a recent challenging task of mapping an image to a  scene graph with both objects and relation labels. We evaluate on the {\em Visual Genome} dataset \citep{krishna2017visual}, and find that our model outperforms all current approaches, demonstrating the power of the new design principle in a challenging setup.}

To evaluate our approach, we first demonstrate on a synthetic dataset that respecting permutation invariance is important, because models that violate this invariance need more training data, despite having a comparable model size. Then, we tackle the problem of scene graph generation. We describe a model that satisfies the permutation invariance property, and show that it achieves state-of-the-art results on the competitive Visual Genome benchmark \citep{krishna2017visual}, 
demonstrating the power of our new design principle.

In summary, the novel contributions of this paper are: a) Deriving sufficient and necessary conditions for  graph-permutation invariance in deep structured prediction architectures. b) Empirically demonstrating the benefit of graph-permutation invariance. c) Developing a state-of-the-art model for scene graph prediction on a large dataset of complex visual scenes.

\section{Structured Prediction}

Scored-based methods in structured prediction define a function $s(x,y)$ that quantifies the degree to which $y$ is compatible with $x$, and infer a label by
maximizing $s(x,y)$ \citep[e.g., see][]{belanger17a,chen2015learning,Lafferty01conditional,Meshi10,taskar03max}.
Most score functions previously used decompose as a sum over {\em simpler} functions, 
$s(x,y) = \sum_i f_i(x,y)$, making it possible to optimize $\max_y f_i(x,y)$  efficiently. This local maximization forms the basic building block of algorithms for approximately maximizing $s(x,y)$. One way to decompose the score function is to restrict each $f_i(x,y)$ to depend only on a small subset of the $y$ variables.


\ignore{
With the renewed interest in deep learning, there have been efforts to integrate deep networks with structured prediction. These included modeling the functions $f_i(x,y;\ww)$ using deep networks parametrized by $\ww$.
The most common score functions used are singleton $f(y_i,x)$ or pairwise $f_{ij}(y_i,y_j,x)$, and indeed several works modeled those using neural nets. Initial works in this direction used a two-stage architecture where local scores were learned independently of the structured prediction goal \citep[e.g., see][]{chen2014semantic,farabet2013learning,long2015fully}. Later works considered {\em end-to-end} architectures where the inference algorithm is part of the computation graph \citep{chen2015learning,PeiGC15,schwing2015fully,zheng2015conditional}.  The above works mostly used deep learning for representing the local score functions. The inference algorithms used were the standard approaches as in earlier structured prediction works. For example, loopy belief propagation (BP) and its variants \citep{chen2015learning}, mean field methods \citep{chen2015learning,schwing2015fully,stoyanov2011empirical} and gradient descent \citep{belanger17a,pmlr-v70-gygli17a}
}

The renewed interest in deep learning led to efforts to integrate deep networks with structured prediction, including modeling the $f_i$ functions as deep networks.
In this context, the most widely-used score functions are singleton $f_i(y_i,x)$ and pairwise $f_{ij}(y_i,y_j,x)$. The early work taking this approach used a two-stage architecture, learning the local scores independently of the structured prediction goal \citep{chen2014semantic,farabet2013learning}. Later studies considered {\em end-to-end} architectures where the inference algorithm is part of the computation graph \citep{chen2015learning,PeiGC15,schwing2015fully,zheng2015conditional}. 
Recent studies go beyond pairwise scores, also modelling global factors  \citep{belanger17a,pmlr-v70-gygli17a}.
 
Score-based methods provide several advantages. First, they allow intuitive specification of local dependencies between labels and how these translate to global dependencies. Second, for linear score functions, the learning problem has natural convex surrogates \cite{Lafferty01conditional,taskar03max}. Third, inference in large label spaces is sometimes possible via exact algorithms or empirically accurate approximations. 
\ignore{First, scores allow an intuitive specification of local dependencies between labels (e.g., pairwise dependencies) and how these translate to global dependencies. Second, when the  score function is linear in the model parameters (i.e., $s(x,y;\ww)$ is linear in $\ww$), the learning problem has natural convex surrogates (e.g., log-loss in CRF and structured hinge-loss in max margin Markov networks), and thus learning is efficient. Third, efficient inference in often possible either via exact algorithms or empirically accurate approximations. 
}
However, with the advent of deep scoring functions $s(x,y;\ww)$, learning is no longer convex. Thus, it is worthwhile to rethink the architecture of structured prediction models, 
and consider models that map inputs $x$ to outputs $y$ directly without explicitly maximizing a score function. We would like these models to enjoy the expressivity and predictive power of neural networks, while maintaining the ability to specify local dependencies between labels in a flexible manner. In the next section, we present such an approach and consider a natural question: what should be the properties of a deep neural network used for structured prediction.

\section{Permutation-Invariant Structured Prediction}
In what follows we define the \textit{permutation-invariance} property for structured prediction models, and argue that permutation invariance is a natural principle for designing their architecture. 

We first introduce our notation. We focus on structures with pairwise interactions, because they are simpler in terms of notation and are sufficient for describing the structure in many problems. We denote a structured label by $y=[y_1, \ldots, y_n]$. In a score-based approach, the score is defined via a set of singleton scores $f_i(y_i,x)$ and pairwise scores  $f_{ij}(y_i,y_j,x)$, where the overall score $s(x,y)$ is the sum of these scores. For brevity, we denote $f_{ij} = f_{ij}(y_i,y_j,x)$ and $f_i = f_i(y_i, x)$. An inference algorithm takes as input the local scores $f_i$, $f_{ij}$ and outputs an assignment that maximizes $s(x,y)$. We can thus view inference as a black-box that takes node-dependent and edge-dependent inputs (i.e., the scores $f_i$, $f_{ij}$) and returns a label $y$, even without an explicit score function $s(x,y)$. While numerous inference algorithms exist for this setup, including belief propagation (BP) and mean field, here we develop a framework for a deep labeling algorithm (we avoid the term ``inference'' since the algorithm does not explicitly maximize a score function). Such an algorithm will be a black-box, taking the $f$ functions as input and the labels $y_1,\ldots,y_n$ as output. We next ask what architecture such an algorithm should have. 

We follow with several definitions. A {\em graph labeling function} $\glf:(V,E)\rightarrow Y$ is a function whose input is an ordered set of node features $V=[\ve_1,\ldots,\ve_n]$ and an ordered set of edge features $E =[\ve_{1,2}\ldots,\ve_{i,j},\ldots,\ve_{n,n-1}]$. For example, $\ve_i$ can be the array of values $f_i$, and $\ve_{i,j}$ can be the table of values $f_{i,j}$. Assume $\ve_i\in\reals^d$ and $\ve_{i,j}\in\reals^e$.  The output of $\glf$ is a set of node labels $\yy = [y_1,\ldots,y_n]$. Thus, algorithms such as BP are graph labeling functions. However, graph labeling functions do not necessarily maximize a score function. We denote the joint set of node features and edge features by $\ve$ (i.e., a set of $n + n(n-1)=n^2$ vectors).
In \secref{sec:general} we discuss extensions to this case where only a subset of the edges is available.


\begin{figure}[t]
    \begin{center}
        \includegraphics[width=\linewidth]{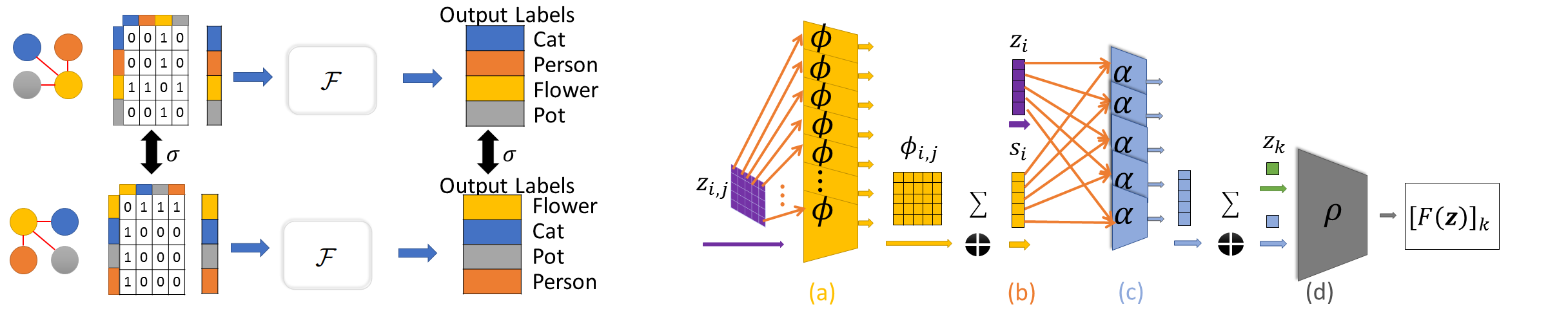}
    \caption{{\bf Left}: Graph permutation invariance. A graph labeling function $\glf$ is  graph permutation invariant (GPI) if permuting the node features maintains the output.
    {\bf Right}: a schematic representation of the GPI architecture in \hyperref[graph_permutation_form] {Theorem \ref{thm:gpi}}. Singleton features $\ve_i$ are omitted for simplicity. \textbf{(a)} First, the features $\ve_{i,j}$ are processed element-wise by $\phi$.  \textbf{(b)} Features are summed to create a vector $\ss_i$, which is concatenated with $\ve_i$.  \textbf{(c)} A representation of the entire graph is created by applying $\alphav$ $n$ times and summing the created vector. \textbf{(d)} The graph representation is then finally  processed by $\rho$ together with $\ve_k$. }
    \label{fig:f_gpi}
    \end{center}
\end{figure}

A natural requirement is that the function $\cF$ produces the same result when given the same features, up to a permutation of the input. For example, consider a label space with three variables $y_1, y_2, y_3$, and assume that $\cF$ takes as input 
$\ve = (\ve_1, \ve_2, \ve_3, \ve_{12},\ve_{13},\ve_{23}) = (f_1, f_2, f_3, f_{12}, f_{13}, f_{23})$, and outputs a label $\yy = (y^*_1, y^*_2, y^*_3)$.  When $\cF$ is given an input that is permuted in a consistent way, say, $\ve' =  (f_2, f_1, f_3, f_{21}, f_{23}, f_{13})$, this defines {\em exactly} the same input. Hence, the output should still be $\yy = (y^*_2, y^*_1, y^*_3)$.  Most inference algorithms, including BP and mean field, satisfy this symmetry requirement by design, but this property is not guaranteed in general in a deep model. Here, our goal is to design a deep learning black-box, and hence we wish to guarantee invariance to input permutations. A black-box that violates this invariance ``wastes'' capacity on learning it at training time, which increases sample complexity, as shown in Sec. \ref{toy}. We proceed to formally define the permutation invariance property.

\begin{definition} \label{def:permutation}
Let $\ve$ be a set of node features and edge features, and let $\sigma$ be a permutation of $\{1,\ldots,n\}$. We define $\sigma(\ve)$ to be a new set of node and edge features given by $[\sigma(\ve)]_i= \ve_{\sigma(i)}$ and $[\sigma(\ve)]_{i,j} = \ve_{\sigma(i),\sigma(j)}$.
\end{definition}
We also use the notation $\sigma([y_1,\ldots,y_n]) = [y_{\sigma(1)}, \ldots, y_{\sigma(n)}]$ for permuting the labels. Namely, $\sigma$ applied to a set of labels yields the same labels, only permuted by $\sigma$. Be aware that applying $\sigma$ to the input features  is different from permuting labels, because edge input features must permuted in a way that is consistent with permuting node input features. We now provide our key definition of a function whose output is invariant to permutations of the input. See \figref{fig:f_gpi} (left).
\begin{definition}\label{graph_permutation_invariant_definition}
A graph labeling function $\glf$ is said to be {\bf graph-permutation invariant} (GPI), if for all permutations $\sigma$ of $\{1,\ldots,n\}$ and for all $\ve$ it satisfies:
$ \glf(\sigma(\ve)) = \sigma(\glf(\ve))$.
\end{definition}


\subsection{Characterizing Permutation Invariance}
Motivated by the above discussion, we ask: what structure is necessary and sufficient to guarantee that $\glf$ is GPI?
Note that a function $\glf$ takes as input an {\bf ordered} set $\ve$. Therefore its output on $\ve$ could certainly differ from its output on $\sigma(\ve)$. To achieve permutation invariance, $\glf$ should  contain certain symmetries. For instance, one permutation invariant architecture could be to define $y_i = g(\ve_i)$ for any function $g$, but this architecture is too restrictive and does not cover all permutation invariant functions. Theorem \ref{thm:gpi} below provides a complete characterization (see \figref{fig:f_gpi} for the corresponding architecture). Intuitively, the architecture in Theorem \ref{thm:gpi} is such that it can aggregate information from the entire graph, and do so in a permutation invariant manner.
\begin{theorem}
\label{thm:gpi}
Let $\glf$ be a graph labeling function. Then $\glf$ is graph-permutation invariant if and only if there exist functions $\alphav,\rho,\phiv$ such that for all $k=1,\ldots,n$: 
\begin{equation}
    \label{graph_permutation_form}
    [\glf(\ve)]_k = \rho\left(\ve_k,\sum_{i=1}^n \boldsymbol{\alpha} \left(\ve_i, \sum_{j \neq i} \boldsymbol{\phi}(\ve_{i}, \ve_{i,j}, \ve_{j})\right)\right), 
\end{equation}
where $\boldsymbol{\phi}:\reals^{2d+e} \rightarrow \reals^{L}$, $\boldsymbol{\alpha}:\reals^{d+L} \rightarrow \reals^W$ and $\boldsymbol{\rho}:\reals^{W+d}\to \reals$. 
\end{theorem}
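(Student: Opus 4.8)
The statement is an equivalence, so I would prove the two implications separately, starting with the easy \emph{sufficiency} direction. Suppose $\glf$ has the form of \eqref{graph_permutation_form}. Fix a permutation $\sigma$ and evaluate $\glf$ at $\sigma(\ve)$, substituting $[\sigma(\ve)]_i=\ve_{\sigma(i)}$ and $[\sigma(\ve)]_{i,j}=\ve_{\sigma(i),\sigma(j)}$ from Definition~\ref{def:permutation}. I would then reindex the two sums by $i'=\sigma(i)$ and $j'=\sigma(j)$; since $\sigma$ is a bijection and the constraint $j\neq i$ is equivalent to $j'\neq i'$, both the inner sum $\sum_{j\neq i}\phiv(\cdot)$ and the outer sum $\sum_i\alphav(\cdot)$ are left unchanged. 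Hence the second argument of $\rho$ is a single permutation-invariant aggregate $A$, and the only dependence on $k$ is through the first slot $\ve_k$, which becomes $\ve_{\sigma(k)}$. Thus $[\glf(\sigma(\ve))]_k=\rho(\ve_{\sigma(k)},A)=[\glf(\ve)]_{\sigma(k)}=[\sigma(\glf(\ve))]_k$, which is exactly GPI. This direction is routine and uses only commutativity of summation.

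For \emph{necessity} I would construct $\phiv,\alphav,\rho$ that reproduce an arbitrary GPI $\glf$. The guiding idea is to make the nested aggregate in \eqref{graph_permutation_form} losslessly encode the whole input $\ve$ up to permutation, so that $\rho$ can decode a representative graph, locate node $k$, and simply evaluate $\glf$ on it. Concretely, I would first build $\phiv$ so that $\ss_i=\sum_{j\neq i}\phiv(\ve_i,\ve_{i,j},\ve_j)$ is an invertible encoding of the multiset $\{(\ve_{i,j},\ve_j):j\neq i\}$; next build $\alphav$ so that $\alphav(\ve_i,\ss_i)$ decodes this multiset, assembles node $i$'s complete local description $R_i=(\ve_i,\{(\ve_{i,j},\ve_j)\}_j)$, and re-encodes it invertibly. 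The outer sum $\sum_i\alphav(\ve_i,\ss_i)$ then encodes the multiset $\{R_i\}_{i=1}^n$, which determines $\ve$ up to a relabeling of nodes. Finally $\rho(\ve_k,\cdot)$ decodes $\{R_i\}$, reconstructs a canonical representative $\ve'=\tau(\ve)$, identifies the node $\tau(k)$ carrying features $\ve_k$, and returns $[\glf(\ve')]_{\tau(k)}$. Well-definedness of $\rho$ follows from GPI itself: since $\ve'=\tau(\ve)$, we get $[\glf(\ve')]_{\tau(k)}=[\glf(\tau(\ve))]_{\tau(k)}=[\tau(\glf(\ve))]_{\tau(k)}=[\glf(\ve)]_k$, independent of the representative chosen.

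The crux, and the main obstacle, is the invertibility of the sum-based encodings, i.e., a \emph{multiset-coding} lemma: for a fixed $n$ there is an injection of each element into $\reals^{L}$ (resp.\ $\reals^{W}$, with $L,W$ chosen to grow with $n$) whose summation over at most $n$ terms still determines the multiset. I would prove this by hashing each element to a real through an injection $H$ and mapping $r\mapsto(r,r^2,\ldots,r^n)$, so that the coordinatewise sum yields the first $n$ power sums of $\{r_j\}$, which (by Newton's identities) fix a multiset of size $\le n$. The theorem asks only for the \emph{existence} of $\phiv,\alphav,\rho$, not their continuity, so $H$ may be an arbitrary (non-continuous) injection $\reals^m\to\reals$, which exists by cardinality; this is what lets the encoding step go through. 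The subtle point I would flag is that the final decoding requires the node features $\ve_i$ to be distinct (general position): if two nodes share features, the sum $\ss_i$ collapses the pairing between a neighbor and its edge, and the aggregate genuinely cannot separate inputs differing only in that pairing, so distinctness of the $\ve_i$ is precisely the hypothesis that restores invertibility (and coinciding features correspond exactly to the symmetries GPI already equates). I would therefore carry out the necessity argument under the standing assumption that the node features are distinct, which is where essentially all the work lies; the remainder is bookkeeping.
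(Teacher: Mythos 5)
Your proof is correct and follows the same overall strategy as the paper's: sufficiency by observing that the nested sums are permutation-invariant aggregates, so only the first argument $\ve_k$ of $\rho$ carries the index; necessity by making the second argument of $\rho$ a lossless encoding of the entire input (under the standing assumption of distinct node features, which the paper also makes), from which $\rho$ reconstructs the graph, applies the black-box GPI function, and reads off the coordinate of node $k$. The one genuine difference is the mechanism for making the sums injective. The paper avoids any multiset-coding lemma by hashing node features into one-hot buckets: $\phiv(\ve_i,\ve_{i,j},\ve_j)=\onehotvector{H(\ve_j)}\vescalar_{i,j}$ places each summand in a coordinate unique to neighbor $j$, and $\alphav(\ve_i,\ss_i)=\onehotvector{H(\ve_i)}\ss_i^T$ places each row in a slot unique to node $i$, so the sums are trivially invertible because their terms have disjoint supports, and the aggregate is literally the pairwise-feature matrix with rows and columns permuted by $H$. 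Your power-sum/Newton's-identities encoding is a valid alternative --- it is the right tool when summands genuinely overlap --- but it is heavier than needed here; the hash construction buys a more transparent decoder at the cost of $L$ scaling with the feature universe rather than with $n$. Two small cautions: the identity $[\tau(\glf(\ve))]_{\tau(k)}=[\glf(\ve)]_k$ holds only if $\tau(k)$ denotes the canonical position of the node carrying $\ve_k$, i.e., $\tau$ is the inverse of the permutation in the convention of Definition~\ref{def:permutation}, so fix one convention; and your parenthetical that coinciding node features ``correspond exactly to the symmetries GPI already equates'' is not right --- two inputs can share node features yet differ in edge features without being related by any permutation --- so distinctness is a substantive restriction on the necessity direction, in your argument and in the paper's alike.
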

\begin{proof}

First, we show that any $\glf$ satisfying the conditions of Theorem \ref{thm:gpi} is GPI. Namely, for any permutation $\sigma$, $[\glf(\sigma(\ve))]_k = [\glf(\ve)]_{\sigma(k)}$. To see this, write $[\glf(\sigma(\ve))]_k$ using \eqref{graph_permutation_form} and Definition~\ref{def:permutation}:
\begin{equation}\label{graph_permutation_proof}
    [\glf(\sigma(\ve))]_k = \rho (\ve_{\sigma(k)},\sum_{i} \boldsymbol{\alpha} (\ve_{\sigma(i)}, \sum_{j\neq i} \boldsymbol{\phi}(\ve_{\sigma(i)}, \ve_{\sigma(i),\sigma(j)}, \ve_{\sigma(j)}))).
\end{equation}
The second argument of $\rho$ above is invariant under $\sigma$, because it is a sum over nodes and their neighbors, which is invariant under permutation. Thus \eqref{graph_permutation_proof} is equal to:
\begin{align*}
\rho(\ve_{\sigma(k)},\sum_{i} \boldsymbol{\alpha} (\ve_{i}, \sum_{j\neq i} \boldsymbol{\phi}(\ve_{i}, \ve_{i,j}, \ve_{j}))) &= [\glf(\ve)]_{\sigma(k)}
\end{align*} where equality follows from \eqref{graph_permutation_form}. We thus proved that \eqref{graph_permutation_form} implies graph permutation invariance.

Next, we prove that {\em any} given GPI function $\glf_0$ can be expressed as a function $\glf$ in \eqref{graph_permutation_form}.
Namely, we  show how to define  $\boldsymbol{\phi},\boldsymbol{\alpha}$ and $\rho$ that can implement $\glf_0$. Note that in this direction of the proof the function $\glf_0$ is a black-box. Namely, we only know that it is GPI, but do not assume anything else about its implementation.

The key idea is to construct $\phiv,\alphav$ such that the second argument of $\rho$ in \eqref{graph_permutation_form} contains  the information about {\em all} the graph features $\ve$. Then, the function $\rho$ corresponds to an application of $\glf_0$ to this representation, followed by extracting the label $y_k$. To simplify notation assume edge features are scalar ($e=1$). The extension to vectors is simple, but involves more indexing. 

We assume WLOG that the black-box function $\glf_0$ is a function only of the pairwise features $\ve_{i,j}$ (otherwise, we can always augment the pairwise features with the singleton features). Since $\ve_{i,j}\in\reals$ we use a matrix $\reals^{n,n}$ to denote all the pairwise features.

Finally, we assume that our implementation of $\glf_0$ will take additional node features $z_k$ such that no two nodes have the same feature (i.e., the features identify the node). 

Our goal is thus to show that there exist functions $\alpha,\phiv,\rho$ such that the function in \eqref{graph_permutation_proof} applied to $Z$ yields the same labels as $\glf_0(Z)$.



Let $H$ be a hash function with $L$ buckets mapping node features $\ve_i$ to an index (bucket). Assume that $H$ is perfect (this can be achieved for a large enough $L$). Define $\boldsymbol{\phi}$ to map the pairwise features to a vector of size $L$. Let $\onehotvector{j}$ be a one-hot vector of dimension $\reals^L$, with one in the $j^{\text{th}}$ coordinate. Recall that we  consider scalar $\ve_{i,j}$ so that $\phiv$ is indeed in $\reals^L$, and define $\phiv$ as: $\phiv(\ve_i, \ve_{i,j}, \ve_j)=\onehotvector{H(\ve_j)} \vescalar_{i,j}$, 
i.e., $\phiv$ ``stores'' $\vescalar_{i,j}$ in the unique bucket for node $j$. 

Let $\ss_i=\sum_{\ve_{i,j} \in E} \boldsymbol{\phi}(\ve_{i}, \vescalar_{i,j}, \ve_{j})$ be the second
argument of $\alphav$ in \eqref{graph_permutation_form} ($\ss_i\in\reals^L$). Then, since all $\ve_{j}$ are distinct, $\ss_i$ stores all the pairwise features for neighbors of $i$ in unique positions within its $L$ coordinates. Since  $\ss_i(H(\ve_k))$ contains the feature $z_{i,k}$ whereas $\ss_j(H(\ve_k))$ contains the feature $\ve_{j,k}$, we cannot simply sum the $\ss_i$, since we would lose the information of which edges the features originated from. Instead, we define $\alphav$ to map $\ss_i$ to $\reals^{L\times L}$ such that each feature is mapped to a distinct location.
Formally:
\begin{equation}
\alphav(\zz_i, \ss_i) = \onehotvector{H(\zz_i)}\ss_i^T ~.
\label{eq:alpha}
\end{equation}
$\alphav$ outputs a matrix that is all zeros except for the features corresponding to node $i$ that are stored in row $H(\zz_i)$. The matrix $M = \sum_i \alphav(\zz_i, \ss_i)$ (namely, the second argument of $\rho$ in \eqref{graph_permutation_form}) is a matrix with all the edge features in the graph including the graph structure.

To complete the construction we set $\rho$ to have the same outcome as $\glf_0$. We first discard rows and columns in $M$ that do not correspond to original nodes (reducing $M$ to dimension $n\times n$).  Then, we use the reduced matrix as the input $\ve$ to the black-box $\glf_0$. 

Assume for simplicity that $M$ does not need to be contracted (this merely introduces another indexing step). Then $M$  corresponds to the original matrix $Z$ of pairwise features, with both rows and columns permuted according to $H$.
We will thus use $M$ as input to the function $\glf_0$. Since $\glf_0$ is GPI, this means that the label for node $k$ will be given by $\glf_0(M)$ in position $H(\zz_k)$. Thus we set $\rho(\zz_k,M) = [\glf_0(M)]_{H(\zz_k)}$, and by the argument
above this equals $[\glf_0(Z)]_k$, implying that the above $\alphav,\phiv$ and $\rho$ indeed implement $\glf_0$.
\end{proof}
\vspace{-0.6cm}
\paragraph{Extension to general graphs}
\label{sec:general}
So far, we discussed complete graphs, where edges correspond to valid feature pairs. However, many graphs of interest might be incomplete. For example, an $n$-variable chain graph in sequence labeling has only $n - 1$ edges. For such graphs, the input to $\glf$ would not contain all $\ve_{i,j}$ pairs but rather only features corresponding to valid edges of the graph, and we are only interested in invariances that preserve the graph structure, namely, the automorphisms of the graph. Thus, the desired invariance is that $\sigma(\glf(\ve))=\glf(\sigma(\ve))$, where $\sigma$ is not an arbitrary permutation but an automorphism. It is easy to see that a simple variant of \hyperref[graph_permutation_form] {Theorem 1} holds in this case. All we need to do is 
replace in \eqref{graph_permutation_proof} the sum $\sum_{j\neq i}$ with $\sum_{j\in N(i)}$, where $N(i)$ are the neighbors of node i in the graph. The arguments are then similar to the proof above. 

\paragraph{Implications of Theorem 1}
\ignore{
The architecture is required to be both expressive and invariant to graph permutation as defined in (\ref{graph_permutation_invariant_definition}).
Naturally, those requirements will be fulfilled by an architecture which based on \hyperref[graph_permutation_form] {Theorem 1}.

\jb{It is unclear what is the point of this section. it seems like a discussion on the last section but no clear unifying thread...}
\hyperref[graph_permutation_form] {Theorem \ref{thm:gpi}} suggest that the inferring process should be composed from two steps.
First capture the essence of  the graph following by a step which makes a prediction using the captured features together with the features of the specified node.
}

Our result has interesting implications for deep structured prediction.
First, it highlights that the fact that the architecture ``collects'' information from {\em all} different edges of the graph, in an invariant fashion via the $\alphav,\phiv$ functions. Specifically, the functions $\phi$ (after summation) aggregate all the features around a given node, and then $\alpha$ (after summation) can collect them. Thus, these functions can provide a summary of the entire graph that is sufficient for downstream algorithms. This is different from one round of message passing algorithms which would not be sufficient for collecting global graph information. Note that the dimensions of $\phi,\alpha$ may need to be large to aggregate all graph information (e.g., by hashing all the features as in the proof of Theorem \ref{thm:gpi}), but the architecture itself can be shallow. 
%

Second, the architecture is parallelizable, as  all $\phiv$ functions can be applied simultaneously. This is in contrast to recurrent models \cite{neural_motifs} which are harder to parallelize and are thus slower in practice.
\ignore{
\jb{I would delete the next paragraph, not clear what you learn here.}
Third, the theorem uses $\ve_i$ in both $\alphav$ and $\rho$, which in the theorem we used to ``identify'' a node with respect to the description of the graph. This is not a commonly used idea in many neural message passing schemes, and we find that it both serves the proof, and is useful in practice. \jb{compared to what? I don't think we show it's useful in practice}
}

Finally, the theorem suggests several common architectural structures that can be used within GPI. We briefly mention two of these.
1) {\bf Attention:}
Attention is a powerful component in deep learning architectures \citep{bahdanau2015neural}, but most inference algorithms do not use attention. 
Intuitively, in attention each node $i$ aggregates features of neighbors through a weighted sum, where the weight is a function of the neighbor's relevance. 
For example, the label of an entity in an image may depend more strongly on entities that are spatially closer.
Attention can be naturally implemented in our GPI characterization, and we provide a full derivation for this implementation in the appendix. It plays a key role in our scene graph model described below.
2) {\bf RNNs:}
Because GPI functions are closed under composition, for any GPI function $\glf$ 
we can run $\glf$ iteratively
by providing the output of one step of $\glf$ as part of the input to the next step and maintain GPI. This results in a recurrent architecture, which we use in our scene graph model.

\section{Related Work}

The concept of architectural invariance was recently proposed in \textsc{DeepSets} \citep{deep_sets}. The invariance we consider is much less restrictive: the architecture does not need to be invariant to all permutations of singleton and pairwise features, just those consistent with a graph re-labeling. This characterization results in a substantially different set of possible architectures.

{\bf Deep structured prediction}. There has been significant recent interest in extending deep learning to structured prediction tasks. Much of this work has been on semantic segmentation, where convolutional networks \citep{fcn} became a standard approach for obtaining ``singleton scores'' and various approaches were proposed for adding structure on top. Most of these approaches used variants of message passing algorithms, unrolled into a computation graph \citep{sg_generation_msg_pass}. Some studies parameterized parts of the message passing algorithm and learned its parameters \citep{lin2015deeply}. Recently, gradient descent has also been used for maximizing score functions \citep{belanger17a,pmlr-v70-gygli17a}. An alternative to deep structured prediction is greedy decoding, inferring each label at a time based on previous labels. This approach has been  popular in sequence-based applications (e.g., parsing \citep{chen2014fast}), relying on the sequential structure of the input, where BiLSTMs are effectively applied. Another related line of work is applying deep learning to graph-based problems, such as TSP \citep{bello2016neural,gilmer2017neural,khalil2017learning}. Clearly, the notion of graph invariance is important in these, as highlighted in \citep{gilmer2017neural}. They however do not specify a general architecture that satisfies invariance as we do here, and in fact focus on message passing architectures, which we strictly generalize. Furthermore, our focus is on the more general problem of structured prediction, rather than specific graph-based optimization problems.

\textbf{Scene graph prediction. } Extracting scene graphs from images provides a semantic representation that can later be used for reasoning, question answering, and image retrieval \citep{img_retriev_using_sg, lang_prior, entangled_scene}. It is  at the forefront of machine vision research, integrating challenges like object detection, action recognition and detection of human-object interactions  \citep{support_relations, plummerPLCLC2017}. Prior work on scene graph predictions used neural message passing algorithms \citep{sg_generation_msg_pass} as well as prior knowledge in the form of word embeddings \citep{lang_prior}. Other work suggested to predict graphs directly from pixels in an end-to-end manner \cite{pixels_to_graph}. \text{NeuralMotif} \citep{neural_motifs}, currently the state-of-the-art model for scene graph prediction on Visual Genome, employs an RNN that provides global context by sequentially reading the independent predictions for each entity and relation and then refines those predictions. The \textsc{NeuralMotif} model maintains GPI by fixing the order in which the RNN reads its inputs and thus only a single order is allowed. However, this fixed order is not guaranteed to be optimal. 

\ignore{
\begin{itemize}
    \item{Deep Sets \citep{deep_sets}.}
    \item{Fei-Fei 2 papers: Scene Graph Generation by Iterative Message Passing and Visual Relationship Detection with Language Priors. \citep{sg_generation_msg_pass,lang_prior}}
    \item{Pixels to Graphs by Associative Embedding \citep{pixel2graph}.}
    \item{Neural Motifs: Scene Graph Parsing with Global Context \citet{neural_motifs}.}
    \item{Relation Networks by DeepMind \citep{relation_network}.}
    \item{The More You Know: Using Knowledge Graphs for Image Classification by Ruslan Salakhutdinov \citep{knowledge_graphs}.}
    \item{Neural message passing (by Oriol and recent NIPS paper by Le Song) \citep{dl_msg_pass_inf}}
\end{itemize}
}

\section{Experimental Evaluation}
\label{sec:experiments}
We empirically evaluate the benefit of GPI architectures. First, using  a synthetic graph-labeling task, and then for the problem of mapping images to scene graphs.

\subsection{Synthetic Graph Labeling}
\label{toy}
We start with studying GPI on a synthetic problem, defined as follows. An input graph $G=(V,E)$ is given, where each node $i \in V$ is assigned to one of $K$ sets. The set for node $i$ is denoted by $\Gamma(i)$. The goal is to compute for each node the number of neighbors that belong to the same set. Namely, the label of a node is $y_i=\sum_{j \in N(i)} \mathbb{1}[\Gamma(i) = \Gamma(j)]$. We generated random graphs with 10 nodes (larger graphs produced similar results) by sampling each  edge independently and uniformly, and sampling  $\Gamma(i)$ for every node uniformly from $\{1,\ldots,K\}$.
The node features $\ve_i \in \{0,1\}^K$ are one-hot vectors of $\Gamma(i)$ and the edge features $\ve_{i,j}\in\{0,1\}$ indicate whether $ij\in E$. We compare two standard non-GPI architectures and one GPI architecture: 
(a) A GPI-architecture for graph prediction, described in detail in Section \ref{SG}. We used the basic version without attention and RNN. 
(b) LSTM: We replace $\sum\phi(\cdot)$ and $\sum\alpha(\cdot)$, which perform aggregation in Theorem \ref{thm:gpi} with two LSTMs with a state size of 200 that read their input in random order. 
(c) A fully-connected (FC) feed-forward network with 2 hidden layers of 1000 nodes each.
The input to the fully connected model is a concatenation of all node and pairwise features. The output is all node predictions. The focus of the experiment is to study sample complexity. Therefore, for a fair comparison, we use the same number of parameters for all models.

\begin{figure}	
\begin{tabular}{ll}
    \includegraphics[scale=0.3]{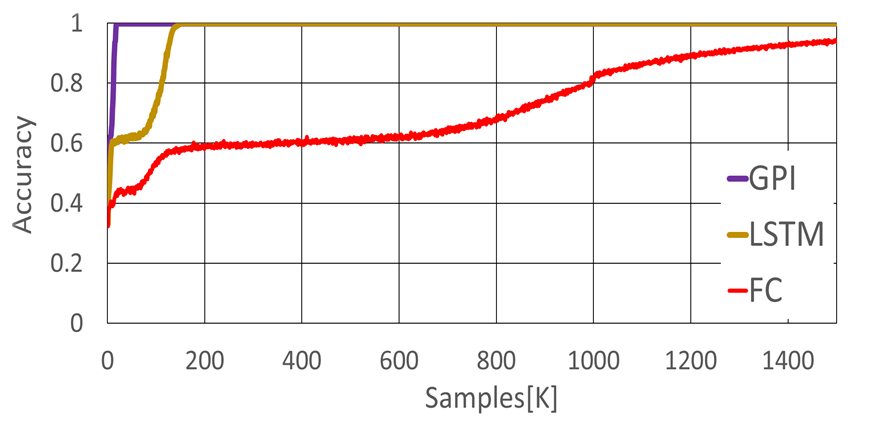}
    \includegraphics[scale=0.3]{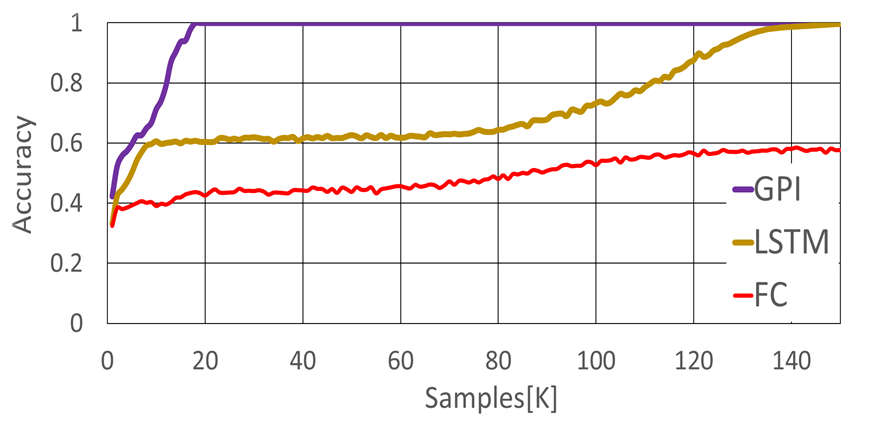}
\end{tabular}
\caption{Accuracy as a function of sample size for graph labeling. Right is a zoomed in version of left.}
\label{fig:toy_results}
\end{figure}

\figref{fig:toy_results}, shows the results, demonstrating that GPI requires far fewer samples to converge to the correct solution.
This illustrates the advantage of an architecture with the correct inductive bias for the problem. 

\vspace{-5pt}
\subsection{Scene-Graph Classification}\label{SG}
We evaluate the GPI approach on the motivating task of this paper, inferring scene graphs from images (Figure \ref{sg_example}). In this problem, the input is an image annotated with a set of {\em bounding boxes} for the entities in the image.\footnote{For simplicity, we focus on the task where boxes are given.} The goal is to label each bounding box with the correct entity category and every pair of entities with their relation, such that they form a coherent {\em scene graph}. 



We begin by describing our {\em Scene Graph Predictor (SGP)} model.  We aim to predict two types of variables. The first is entity variables $[y_1, \dots, y_n]$ for all bounding boxes. Each $y_i$ can take one of $L$ values (e.g., ``dog'', ``man''). The second is relation variables $[y_{n+1}, \dots, y_{n^2}]$ for every pair of bounding boxes. Each such $y_j$ can take one of $R$ values (e.g., ``on'', ``near''). Our graph connects variables that are expected to be inter-related. It contains two types of edges: 1) {\bf entity-entity edge} connecting every two entity variables $(y_i$ and $y_j$ for $1 \leq i \neq j \leq n$. 2) {\bf entity-relation edges} connecting every relation variable $y_k$ (where $k>n$) to its two entity variables. 
Thus, our graph is not a complete graph and our goal is to design an architecture that will be invariant to any automorphism of the graph, such as permutations of the entity variables.

For the input features $\ve$, we used the features learned by the baseline model from \cite{neural_motifs}.\footnote{The baseline does not use any LSTM or context, and is thus unrelated to the main contribution of \cite{neural_motifs}.} Specifically, the entity features $\ve_i$ included (1) The confidence probabilities of all entities for $y_i$ as learned by the baseline model. (2) Bounding box information given as \texttt{(left, bottom, width, height)}; (3) The number of smaller entities (also bigger); (4) The number of entities to the left, right, above and below. (5) The number of entities with higher and with lower confidence; (6) For the linguistic model only: word embedding of the most probable class. Word vectors were learned with \textsc{GloVe} from the ground-truth captions of Visual Genome.

Similarly, the relation features $\ve_j\in\reals^R$ contained the probabilities of relation entities for the relation $j$. For the Linguistic model, these features were extended to include word embedding of the most probable class. For entity-entity pairwise features $\ve_{i,j}$, we use the relation probability for each pair. Because the output of SGP are probability distributions over entities and relations, we use them as an the input $\ve$ to SGP, once again in a recurrent manner and maintain GPI. 






\newcommand{\sE}{\mathcal{P}}
\newcommand{\sR}{\mathcal{R}}

We next describe the main components of the GPI architecture.
First, we focus on the parts that output the entity labels. 
$\boldsymbol{\phi}_{ent}$ is the network that integrates features for two entity variables $y_i$ and $y_j$. It simply takes $\ve_i$, $\ve_j$ and $\ve_{i,j}$ as input, and outputs a vector of dimension $n_1$. Next, the network $\alphav_{ent}$ takes as input the outputs of $\boldsymbol{\phi}_{ent}$ for all neighbors of an entity, and uses the attention mechanism described above to output a vector of dimension $n_2$. Finally, the $\rho_{ent}$ network takes these $n_2$ dimensional vectors and outputs $L$ logits predicting the entity value. The $\rho_{rel}$ network takes as input the $\alpha_{ent}$ representation of the two entities, as well as $\ve_{i,j}$ and transforms the output into $R$ logits. See appendix for specific network architectures.

\ignore{
While the LP module described above is trivially GPI, because the output variables $y^{\text{ent}}_i, y^{\text{rel}}_{i,j}$ are predicted independently, constructing a GPI architecture for a {\em Scene Graph Predictor} is harder. We now outline this construction. 
Entity classification in this module is GPI following \hyperref[graph_permutation_form] {Theorem 1}, where $\ve_i$ are features for every bounding box and $\ve_{i,j}$ are features for box pairs. To classify relations, we added a function $\rho_{\text{relation}}$ that reuses the GPI representation created during entity classification. Because the input to $\rho_{\text{relation}}$ is a GPI representation, it is easy to show that our entire network is  GPI.

Let $\ve_i$ be the concatenation of $\ve_i^{\text{features}}$ and $\ve_i^{\text{spatial}}$.
Where $\ve_i^{\text{features}}$ is the current label probability for entity $i$ (logits before the final softmax layer) and $\ve_i^{\text{spatial}}$ is $i$'s entity spatial features such as the bounding box given as a \texttt{(x, y, width, height)}. In addition, for $\ve_{i,j}$, we used the confidences for relation $i,j$ (logits before the final softmax layer). In each step of SGP we apply the function $\glf$, which receives all entity features $\ve_i$ and all relation features $\ve_{i,j}$, and output updated confidences for entities and relations.
Because composing GPI functions is GPI, our SGP module is GPI. We now describe our implementation of the three components of $\glf$: $\boldsymbol{\phi}$, $\boldsymbol{\alpha}$ and $\rho$.

(1) $\boldsymbol{\phi}$ is a network with single fully-connected (FC) layer. It receives (a) subject features $\ve_i$ (b) relations features $\ve_{i, j}$ (c) entity features $\ve_j$ and outputs a vector of size $500$. Next, for each entity $i$, we aggregate $\boldsymbol{\phi}(\ve_i, \ve_{i,j}, \ve_j)$ into $s_i$ using the attention mechanism described in Section 4. To calculate the weights $w_{i,j}$, we implement $\beta(\cdot)$ (Eq.~\ref{eq:att_softmax}) with a FC layer that receives the same input as  $\boldsymbol{\phi}$ and outputs a scalar.

(2) $\boldsymbol{\alpha}$ is a single FC-layer network, receiving entity features $\ve_i$ and context features $s_i$. The outputs of $\boldsymbol{\alpha}$ are aggregated with a similar attention mechanism over entities, resulting in a vector $g\in\reals^{500}$ representing the entire graph.

(3)  $\rho$, consists of $\rho_{\text{entity}}$, which classifies entities, and $\rho_{\text{relation}}$, which classifies relations. $\rho_{\text{entity}}$ is a three FC-layer network of size $500$. It  receives $\ve_i$, $s_i$ and $g$ as input, and outputs a vector $q_i$ with one scalar per entity class. Unlike \hyperref[graph_permutation_form] {Theorem 1}, we allow $\rho$ direct access to $s_i$, which maintains the GPI property, and improved learning in practice. The final output confidence is a linear interpolation of the current confidence $\ve_i^{features}$ and the new confidence $q_i$, controlled by a learned forget gate, i.e., the output is $q_i + \text{forget} \cdot \ve_i^{features}$. $\rho_{\text{relation}}$, the relation classifier, is analogous to the entity classifier, receiving as input  $\ve_i$, $\ve_j$, the relation features $\ve_{i,j}$, and the graph representation $g$.
      
We also explored concatenating word embeddings of the most probable entity class to $\ve_i$. Word vectors were learned with \textsc{GloVe} \citep{pennington2014glove} from the ground-truth captions of Visual Genome \citep{krishna2017visual}.
}
\subsubsection{Experimental Setup and Results}
\paragraph{Dataset.}
We evaluated our approach on Visual Genome (VG) \citep{krishna2017visual}, a dataset with 108,077 images annotated with bounding boxes, entities and relations. On average, images have 12 entities and 7 relations per image.
For a proper comparison with previous results \citep{pixels_to_graph,sg_generation_msg_pass,neural_motifs}, we used the data from \citep{sg_generation_msg_pass}, including the train and test splits. For evaluation, we used the same 150 entities and 50 relations as in \citep{pixels_to_graph,sg_generation_msg_pass,neural_motifs}.
To tune hyper-parameters, we also split the training data into two by randomly selecting 5K examples, resulting in a final 70K/5K/32K split for train/validation/test sets.

\begin{table}
\small
  \begin{tabular}{lcccccccc}
  \hline
  \multicolumn{1}{|c|}{} & \multicolumn{4}{c|}{Constrained Evaluation} & \multicolumn{4}{c|}{Unconstrained Evaluation}\\
        \multicolumn{1}{c|}{} & \multicolumn{2}{c|}{SGCls} & \multicolumn{2}{c|}{PredCls} &   \multicolumn{2}{c|}{SGCls} & \multicolumn{2}{c}{PredCls}\\
        & R@50 & R@100 & R@50 & R@100 & R@50 & R@100 & R@50 & R@100 \\
        \midrule
        Lu et al., 2016 \citep{lang_prior} & 11.8 & 14.1 & 35.0 & 27.9 & - & - & - & - \\
        Xu et al., 2017 \citep{sg_generation_msg_pass} & 21.7 & 24.4 & 44.8 & 53.0 & - & - & - & - \\
Pixel2Graph \citep{pixels_to_graph} & - & - & - & - & 26.5 & 30.0 & 68.0 & 75.2\\
{Graph R-CNN} \citep{graph_rcnn} & 29.6 & 31.6 & 54.2 & 59.1 & - & - & - & -\\
        Neural Motifs \citep{neural_motifs} & 35.8 & 36.5 & \textbf{65.2} & \textbf{67.1} & 44.5 & 47.7 & \textbf{81.1} & \textbf{88.3}\\
        Baseline \citep{neural_motifs} & 34.6 & 35.3 & 63.7 & 65.6 & 43.4 & 46.6 & 78.8 & 85.9\\
        \hline
        No Attention & 35.3 & 37.2 & 64.5 & 66.3 & 44.1 & 48.5 &  79.7 & 86.7\\ 
        Neighbor Attention & 35.7 & 38.5 & 64.6 & 66.6 & 44.7 & 49.9 &  80.0 & 87.1\\
        Linguistic & \textbf{36.5} & \textbf{38.8} & 65.1 & 66.9 & \textbf{45.5} & \textbf{50.8} & 80.8 & 88.2\\
        \bottomrule
  \hline
  \end{tabular}
  \caption{Test set results for graph-constrained evaluation (i.e., the returned triplets must be consistent with a scene graph) and for unconstrained evaluation (triplets need not be consistent with a scene graph).}
  \label{results}
\hspace{2.0cm}
\end{table}

\vspace{-5pt}
\paragraph{Training.}
All networks were trained using Adam \citep{kingmaadam} with batch size $20$.
Hyperparameter values below were chosen based on the validation set. 
The SGP loss function was the sum of cross-entropy losses over all entities and relations in the image. In the loss, we penalized entities $4$ times more strongly than relations, and penalized negative relations $10$ times more weakly than positive relations.

\vspace{-5pt}
\paragraph{Evaluation.}
In \citep{sg_generation_msg_pass} three different evaluation settings were considered. Here we focus on two of these:  
\textbf{(1) SGCls:} Given ground-truth bounding boxes for entities, predict all entity categories and relations categories. \textbf{(2) PredCls:} Given bounding boxes annotated with entity labels, predict all relations. Following \citep{lang_prior}, we used Recall@$K$ as the evaluation metric. It  measures the fraction of correct ground-truth triplets that appear within the $K$ most confident triplets proposed by the model.
 Two evaluation protocols are used in the literature differing in whether they enforce graph constraints over model predictions. The first {\em graph-constrained} protocol requires that the top-$K$ triplets assign one consistent class per entity and relation.
 The second {\em unconstrained} protocol does not enforce any such constraints. We report results on both protocols, following \citep{neural_motifs}.

\vspace{-5pt}
\paragraph{Models and baselines.} 
We compare four variants of our GPI approach with the reported results of four baselines that are currently the state-of-the-art on various scene graph prediction problems (all models use the same data split and pre-processing as \citep{sg_generation_msg_pass}): 1) \textsc{Lu et al., 2016 \citep{lang_prior}}: This work leverages word embeddings to fine-tune the likelihood of predicted relations. 2) \textsc{Xu et al, 2017 \citep{sg_generation_msg_pass}}: This model passes messages between entities and relations, and iteratively refines the feature map used for prediction. 3) \textsc{Newell \& Deng, 2017 \citep{pixels_to_graph}}: The \textsc{Pixel2Graph} model uses associative embeddings  \citep{associative_embedding} to produce a full graph from the image. {4) \textsc{Yang et al., 2018 \citep{graph_rcnn}}: The \textsc{GRAPH R-CNN} model uses object-relation regularities to sparsify and reason over scene graphs.} 5) \textsc{Zellers et al., 2017 \citep{neural_motifs}}: The \textsc{NeuralMotif} method encodes global context for capturing high-order motifs in scene graphs, and the \textsc{Baseline} outputs the entities and relations distributions without using the global context. The following variants of GPI were compared: 1) \textsc{GPI: No Attention}: Our GPI model, but with no attention mechanism. Instead, following \hyperref[graph_permutation_form] {Theorem 1}, we simply sum the features. 
2) \textsc{GPI: NeighborAttention}: Our GPI model, with attention over neighbors features. 
3) \textsc{GPI: Linguistic:} Same as \textsc{GPI: NeighborAttention} but also concatenating the word embedding vector, as described above.

\comment{
\begin{enumerate}[topsep=0pt,itemsep=0pt,parsep=0pt,partopsep=0pt]
\item 
\textsc{Lu et al., 2016 \citep{lang_prior}}: This work leverages word embeddings to fine-tune the likelihood of predicted relations.
\item \textsc{Xu et al, 2017 \citep{sg_generation_msg_pass}}: This model passes messages between entities and relations, and iteratively refines the feature map used for prediction.
\item \textsc{Newell \& Deng, 2017 \citep{pixels_to_graph}}. The \textsc{Pixel2Graph} model uses associative embeddings  \citep{associative_embedding} to produce a full graph from the image.
\item \textsc{Zellers et al., 2017 \citep{neural_motifs}} The \textsc{NeuralMotif} method encodes global context for capturing high-order motifs in scene graphs, and the \textsc{Baseline} outputs the entities and relations distributions without using the global context.
\item \textsc{GPI: No Attention}: Our GPI model, but with no attention mechanism. Instead, following \hyperref[graph_permutation_form] {Theorem 1}, we simply sum the features. 
\item \textsc{GPI: NeighborAttention}: Our GPI model, using attention over neighbors as described in Section \ref{sg_model}. 
\item \textsc{GPI: MultiAttention}: Our GPI model, except that we learn different attention weights per feature.
\item \textsc{GPI: Linguistic:} Same as \textsc{GPI: MultiAttention} but also concatenating the word embedding vector for the most probable entity label (see  Sec. \ref{sg_model}).
\end{enumerate}
}
\paragraph{Results.} 
Table~\ref{results} shows recall@$50$ and recall@$100$ for three variants of our approach, and compared with five baselines. All GPI variants performs well, with   \textsc{Linguistic} outperforming all baselines for SGCls and being comparable to the state-of-the-art model for PredCls. Note that PredCl is an easier task, which makes less use of the structure, hence it is  not surprising that GPI achieves similar accuracy to \cite{neural_motifs}. Figure \ref{fig:qualitative_results} illustrates the model behavior. Predicting isolated labels with $\ve_i$ (\ref{fig:qualitative_results}c) mislabels several entities, but these are corrected at the final output (\ref{fig:qualitative_results}d). Figure \ref{fig:qualitative_results}e shows that the system learned to attend more to nearby entities (the window and building are closer to the tree), and \ref{fig:qualitative_results}f shows that stronger attention is learned for the class bird, presumably because it is usually more informative than common classes like tree.

\begin{figure}
\begin{center}
\includegraphics[width=\linewidth]{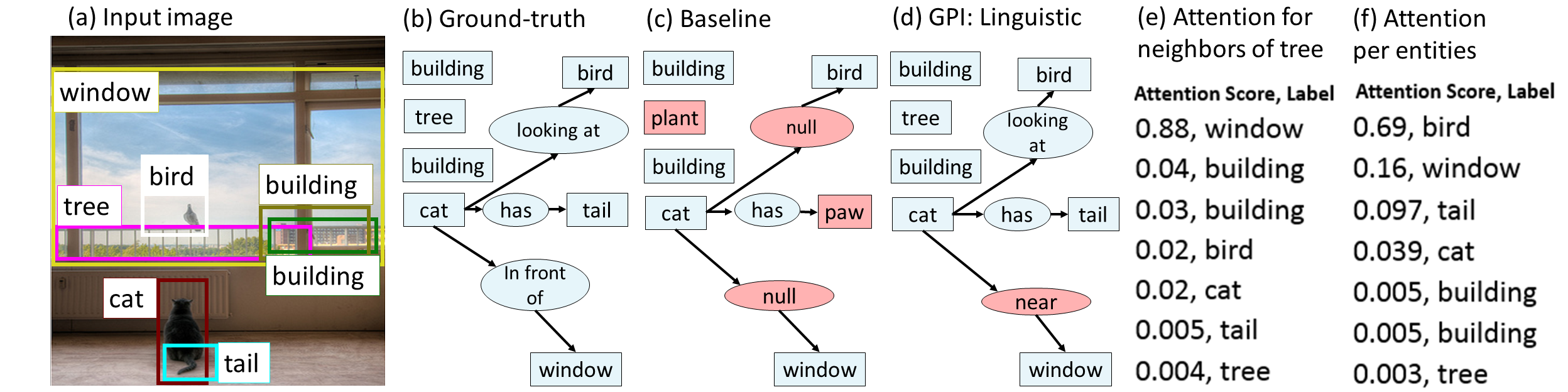}
\caption{\small{(a) An input image with bounding boxes from VG. (b) The ground-truth scene graph. (c) The Baseline fails to recognize some entities (\emph{tail} and \emph{tree}) and relations (\emph{in front of} instead of \emph{looking at}). (d) \textsc{GPI:Linguistic} fixes most incorrect LP predictions. (e)  \emph{Window} is the most significant neighbor of \emph{Tree}. (f) The entity \emph{bird} receives substantial attention, while \emph{Tree} and \emph{building} are less informative.}}
\label{fig:qualitative_results}
\end{center}
\end{figure}

\paragraph{{Implementation details.}}
The $\phi$ and $\alpha$ networks were each implemented as a single fully-connected (FC) layer with a 500-dimensional  outputs. $\rho$ was implemented as a FC network with 3 500-dimensional hidden layers, with one 150-dimensional output for the entity probabilities, and one 51-dimensional output for relation probabilities. The attention mechanism was implemented as a network like to $\phi$ and $\alpha$, receiving the same inputs, but using the output scores for the attention . The full code is available at https://github.com/shikorab/SceneGraph

\ignore{
\textbf{Gal:}
We further describe our Scene Graph Prediction (SGP) model. Recall that the model is composed of three sub-networks $\phi$, $\alpha$, $\rho$ and receives three types of inputs:   entity features $\ve_i$, relation features $\ve_j$ and entity-entity pairwise features $\ve_{i,j}$ which is the relation features between entity $i$ and entity $j$. 

The entity features include: (1) entity confidence learned by the baseline model. (2) bounding box given as a \texttt{(x, y, width, height)}. (3) number of entities with smaller/bigger size. (4) number of entities to the left/right/above/below. (5) number of entities with higher/lower confidence. (6) For Linguistic model only: word embedding of the most probable class. Word vectors were learned with \textsc{GloVe} from the ground-truth captions of Visual Genome.

The relation features include: (1) relation confidence learned by the baseline model. (2) For the Linguistic model only: word embedding of the most probable class. 
}

\vspace{-5pt}
\section{Conclusion}
\label{conclusion}
\vspace{-5pt}
We presented a deep learning approach to structured prediction, which constrains the architecture
to be invariant to structurally identical inputs. As in score-based methods,
our approach relies on pairwise features, capable of describing inter-label correlations,  and thus inheriting the intuitive aspect of score-based approaches. However, instead of maximizing a score function (which leads to computationally-hard inference), we directly produce an output that is invariant to equivalent representations of the pairwise terms.

This axiomatic approach to model architecture can be extended in many ways. For image labeling, geometric invariances (shift or rotation) may be desired. In other cases, invariance to feature permutations may be desirable. We leave the derivation of the corresponding architectures to future work. Finally, there may be cases where the invariant structure is unknown and should be discovered from data, which is related to work on lifting graphical models \cite{Bui:2013}. It would be interesting to explore algorithms that discover and use such symmetries for deep structured prediction.
\section*{Acknowledgements}
\vspace{-5pt}
This work was supported by the ISF Centers of Excellence grant, and by the Yandex Initiative in Machine Learning. Work by GC was performed while at Google Brain Research.

\bibliography{papers_bib}

\begin{thebibliography}{33}
\providecommand{\natexlab}[1]{#1}
\providecommand{\url}[1]{\texttt{#1}}
\expandafter\ifx\csname urlstyle\endcsname\relax
  \providecommand{\doi}[1]{doi: #1}\else
  \providecommand{\doi}{doi: \begingroup \urlstyle{rm}\Url}\fi

\bibitem[Bahdanau et~al.(2015)Bahdanau, Cho, and Bengio]{bahdanau2015neural}
Bahdanau, D., Cho, K., and Bengio, Y.
\newblock Neural machine translation by jointly learning to align and
  translate.
\newblock In \emph{International Conference on Learning Representations
  (ICLR)}, 2015.

\bibitem[Belanger et~al.(2017)Belanger, Yang, and McCallum]{belanger17a}
Belanger, David, Yang, Bishan, and McCallum, Andrew.
\newblock End-to-end learning for structured prediction energy networks.
\newblock In Precup, Doina and Teh, Yee~Whye (eds.), \emph{Proceedings of the
  34th International Conference on Machine Learning}, volume~70, pp.\
  429--439. PMLR, 2017.

\bibitem[Bello et~al.(2016)Bello, Pham, Le, Norouzi, and
  Bengio]{bello2016neural}
Bello, Irwan, Pham, Hieu, Le, Quoc~V, Norouzi, Mohammad, and Bengio, Samy.
\newblock Neural combinatorial optimization with reinforcement learning.
\newblock \emph{arXiv preprint arXiv:1611.09940}, 2016.

\bibitem[Bui et~al.(2013)Bui, Huynh, and Riedel]{Bui:2013}
Bui, Hung~Hai, Huynh, Tuyen~N., and Riedel, Sebastian.
\newblock Automorphism groups of graphical models and lifted variational
  inference.
\newblock In \emph{Proceedings of the Twenty-Ninth Conference on Uncertainty in
  Artificial Intelligence}, UAI'13, pp.\  132--141, 2013.

\bibitem[Chen \& Manning(2014)Chen and Manning]{chen2014fast}
Chen, Danqi and Manning, Christopher.
\newblock A fast and accurate dependency parser using neural networks.
\newblock In \emph{Proceedings of the 2014 conference on empirical methods in
  natural language processing (EMNLP)}, pp.\  740--750, 2014.

\bibitem[Chen et~al.(2014)Chen, Papandreou, Kokkinos, Murphy, and
  Yuille]{chen2014semantic}
Chen, Liang~Chieh, Papandreou, George, Kokkinos, Iasonas, Murphy, Kevin, and
  Yuille, Alan~L.
\newblock Semantic image segmentation with deep convolutional nets and fully
  connected {CRFs}.
\newblock In \emph{Proceedings of the Second International Conference on
  Learning Representations}, 2014.

\bibitem[Chen et~al.(2015)Chen, Schwing, Yuille, and Urtasun]{chen2015learning}
Chen, Liang~Chieh, Schwing, Alexander~G, Yuille, Alan~L, and Urtasun, Raquel.
\newblock Learning deep structured models.
\newblock In \emph{Proc. ICML}, 2015.

\bibitem[Farabet et~al.(2013)Farabet, Couprie, Najman, and
  LeCun]{farabet2013learning}
Farabet, Clement, Couprie, Camille, Najman, Laurent, and LeCun, Yann.
\newblock Learning hierarchical features for scene labeling.
\newblock \emph{IEEE transactions on pattern analysis and machine
  intelligence}, 35\penalty0 (8):\penalty0 1915--1929, 2013.

\bibitem[Gilmer et~al.(2017)Gilmer, Schoenholz, Riley, Vinyals, and
  Dahl]{gilmer2017neural}
Gilmer, Justin, Schoenholz, Samuel~S, Riley, Patrick~F, Vinyals, Oriol, and
  Dahl, George~E.
\newblock Neural message passing for quantum chemistry.
\newblock \emph{arXiv preprint arXiv:1704.01212}, 2017.

\bibitem[Gygli et~al.(2017)Gygli, Norouzi, and Angelova]{pmlr-v70-gygli17a}
Gygli, Michael, Norouzi, Mohammad, and Angelova, Anelia.
\newblock Deep value networks learn to evaluate and iteratively refine
  structured outputs.
\newblock In Precup, Doina and Teh, Yee~Whye (eds.), \emph{Proceedings of the
  34th International Conference on Machine Learning}, volume~70 of
  \emph{Proceedings of Machine Learning Research}, pp.\  1341--1351,
  International Convention Centre, Sydney, Australia, 2017. PMLR.

\bibitem[Johnson et~al.(2015)Johnson, Krishna, Stark, Li, Shamma, Bernstein,
  and Li]{img_retriev_using_sg}
Johnson, Justin, Krishna, Ranjay, Stark, Michael, Li, Li{-}Jia, Shamma,
  David~A., Bernstein, Michael~S., and Li, Fei{-}Fei.
\newblock Image retrieval using scene graphs.
\newblock In \emph{Proc. Conf. Comput. Vision Pattern Recognition}, pp.\
  3668--3678, 2015.

\bibitem[Johnson et~al.(2018)Johnson, Gupta, and Fei{-}Fei]{johnson2018image}
Johnson, Justin, Gupta, Agrim, and Fei{-}Fei, Li.
\newblock Image generation from scene graphs.
\newblock \emph{arXiv preprint arXiv:1804.01622}, 2018.

\bibitem[Khalil et~al.(2017)Khalil, Dai, Zhang, Dilkina, and
  Song]{khalil2017learning}
Khalil, Elias, Dai, Hanjun, Zhang, Yuyu, Dilkina, Bistra, and Song, Le.
\newblock Learning combinatorial optimization algorithms over graphs.
\newblock In \emph{Advances in Neural Information Processing Systems}, pp.\
  6351--6361, 2017.

\bibitem[Kingma \& Ba(2014)Kingma and Ba]{kingmaadam}
Kingma, Diederik~P. and Ba, Jimmy.
\newblock Adam: {A} method for stochastic optimization.
\newblock \emph{arXiv preprint arXiv: 1412.6980}, abs/1412.6980, 2014.

\bibitem[Krishna et~al.(2017)Krishna, Zhu, Groth, Johnson, Hata, Kravitz, Chen,
  Kalantidis, Li, Shamma, et~al.]{krishna2017visual}
Krishna, Ranjay, Zhu, Yuke, Groth, Oliver, Johnson, Justin, Hata, Kenji,
  Kravitz, Joshua, Chen, Stephanie, Kalantidis, Yannis, Li, Li-Jia, Shamma,
  David~A, et~al.
\newblock Visual genome: Connecting language and vision using crowdsourced
  dense image annotations.
\newblock \emph{International Journal of Computer Vision}, 123\penalty0
  (1):\penalty0 32--73, 2017.

\bibitem[Lafferty et~al.(2001)Lafferty, McCallum, and
  Pereira]{Lafferty01conditional}
Lafferty, J., McCallum, A., and Pereira, F.
\newblock Conditional random fields: {P}robabilistic models for segmenting and
  labeling sequence data.
\newblock In \emph{Proceedings of the 18th International Conference on Machine
  Learning}, pp.\  282--289, 2001.

\bibitem[Liao et~al.(2016)Liao, Yang, Ackermann, and
  Rosenhahn]{support_relations}
Liao, Wentong, Yang, Michael~Ying, Ackermann, Hanno, and Rosenhahn, Bodo.
\newblock On support relations and semantic scene graphs.
\newblock arXiv preprint arXiv:1609.05834, 2016.

\bibitem[Lin et~al.(2015)Lin, Shen, Reid, and van~den Hengel]{lin2015deeply}
Lin, Guosheng, Shen, Chunhua, Reid, Ian, and van~den Hengel, Anton.
\newblock Deeply learning the messages in message passing inference.
\newblock In \emph{Advances in Neural Information Processing Systems}, pp.\
  361--369, 2015.

\bibitem[Lu et~al.(2016)Lu, Krishna, Bernstein, and Li]{lang_prior}
Lu, Cewu, Krishna, Ranjay, Bernstein, Michael~S., and Li, Fei{-}Fei.
\newblock Visual relationship detection with language priors.
\newblock In \emph{European Conf. Comput. Vision}, pp.\  852--869, 2016.

\bibitem[Meshi et~al.(2010)Meshi, Sontag, Jaakkola, and Globerson]{Meshi10}
Meshi, O., Sontag, D., Jaakkola, T., and Globerson, A.
\newblock Learning efficiently with approximate inference via dual losses.
\newblock In \emph{Proceedings of the 27th International Conference on Machine
  Learning}, pp.\  783--790, New York, NY, USA, 2010. ACM.

\bibitem[Newell \& Deng(2017)Newell and Deng]{pixels_to_graph}
Newell, Alejandro and Deng, Jia.
\newblock Pixels to graphs by associative embedding.
\newblock In \emph{Advances in Neural Information Processing Systems 30 (to
  appear)}, pp.\  1172--1180. Curran Associates, Inc., 2017.

\bibitem[Newell et~al.(2017)Newell, Huang, and Deng]{associative_embedding}
Newell, Alejandro, Huang, Zhiao, and Deng, Jia.
\newblock Associative embedding: End-to-end learning for joint detection and
  grouping.
\newblock In \emph{Neural Inform. Process. Syst.}, pp.\  2274--2284. Curran
  Associates, Inc., 2017.

\bibitem[Pei et~al.(2015)Pei, Ge, and Chang]{PeiGC15}
Pei, Wenzhe, Ge, Tao, and Chang, Baobao.
\newblock An effective neural network model for graph-based dependency parsing.
\newblock In \emph{Proceedings of the 53rd Annual Meeting of the Association
  for Computationa Linguistics}, pp.\  313--322, 2015.

\bibitem[Plummer et~al.(2017)Plummer, Mallya, Cervantes, Hockenmaier, and
  Lazebnik]{plummerPLCLC2017}
Plummer, Bryan~A., Mallya, Arun, Cervantes, Christopher~M., Hockenmaier, Julia,
  and Lazebnik, Svetlana.
\newblock Phrase localization and visual relationship detection with
  comprehensive image-language cues.
\newblock In \emph{ICCV}, pp.\  1946--1955, 2017.

\bibitem[Raposo et~al.(2017)Raposo, Santoro, Barrett, Pascanu, Lillicrap, and
  Battaglia]{entangled_scene}
Raposo, David, Santoro, Adam, Barrett, David, Pascanu, Razvan, Lillicrap,
  Timothy, and Battaglia, Peter.
\newblock Discovering objects and their relations from entangled scene
  representations.
\newblock arXiv preprint arXiv:1702.05068, 2017.

\bibitem[Schwing \& Urtasun(2015)Schwing and Urtasun]{schwing2015fully}
Schwing, Alexander~G and Urtasun, Raquel.
\newblock Fully connected deep structured networks.
\newblock \emph{ArXiv e-prints}, 2015.

\bibitem[Shelhamer et~al.(2017)Shelhamer, Long, and Darrell]{fcn}
Shelhamer, Evan, Long, Jonathan, and Darrell, Trevor.
\newblock Fully convolutional networks for semantic segmentation.
\newblock \emph{Proc. Conf. Comput. Vision Pattern Recognition}, 39\penalty0
  (4):\penalty0 640--651, 2017.

\bibitem[Taskar et~al.(2004)Taskar, Guestrin, and Koller]{taskar03max}
Taskar, B., Guestrin, C., and Koller, D.
\newblock Max margin {M}arkov networks.
\newblock In Thrun, S., Saul, L., and {Sch\"{o}lkopf}, B. (eds.),
  \emph{Advances in Neural Information Processing Systems 16}, pp.\  25--32.
  MIT Press, Cambridge, MA, 2004.

\bibitem[Xu et~al.(2017)Xu, Zhu, Choy, and Fei{-}Fei]{sg_generation_msg_pass}
Xu, Danfei, Zhu, Yuke, Choy, Christopher~B., and Fei{-}Fei, Li.
\newblock {Scene Graph Generation by Iterative Message Passing}.
\newblock In \emph{Proc. Conf. Comput. Vision Pattern Recognition}, pp.\
  3097--3106, 2017.

\bibitem[Yang et~al.(2018)Yang, Lu, Lee, Batra, and Parikh]{graph_rcnn}
Yang, Jianwei, Lu, Jiasen, Lee, Stefan, Batra, Dhruv, and Parikh, Devi.
\newblock Graph {R-CNN} for scene graph generation.
\newblock In \emph{European Conf. Comput. Vision}, pp.\  690--706, 2018.

\bibitem[Zaheer et~al.(2017)Zaheer, Kottur, Ravanbakhsh, Poczos, Salakhutdinov,
  and Smola]{deep_sets}
Zaheer, Manzil, Kottur, Satwik, Ravanbakhsh, Siamak, Poczos, Barnabas,
  Salakhutdinov, Ruslan~R, and Smola, Alexander~J.
\newblock Deep sets.
\newblock In \emph{Advances in Neural Information Processing Systems 30}, pp.\
  3394--3404. Curran Associates, Inc., 2017.

\bibitem[Zellers et~al.(2017)Zellers, Yatskar, Thomson, and
  Choi]{neural_motifs}
Zellers, Rowan, Yatskar, Mark, Thomson, Sam, and Choi, Yejin.
\newblock Neural motifs: Scene graph parsing with global context.
\newblock \emph{arXiv preprint arXiv:1711.06640}, abs/1711.06640, 2017.

\bibitem[Zheng et~al.(2015)Zheng, Jayasumana, Romera-Paredes, Vineet, Su, Du,
  Huang, and Torr]{zheng2015conditional}
Zheng, Shuai, Jayasumana, Sadeep, Romera-Paredes, Bernardino, Vineet, Vibhav,
  Su, Zhizhong, Du, Dalong, Huang, Chang, and Torr, Philip~HS.
\newblock Conditional random fields as recurrent neural networks.
\newblock In \emph{Proceedings of the IEEE International Conference on Computer
  Vision}, pp.\  1529--1537, 2015.

\end{thebibliography}
\bibliographystyle{icml2018}

\section{Supplementary Material}

This supplementary material includes: (1) Visual illustration of the proof of Theorem 1. (2) Explaining how to integrate an attention mechanism in our GPI framework. (3) Additional evaluation method to further analyze and compare our work with baselines.

\subsection{Theorem 1: Illustration of Proof}

\begin{figure}[h]
    \begin{center}
  \centerline{\includegraphics[width=0.8\columnwidth]{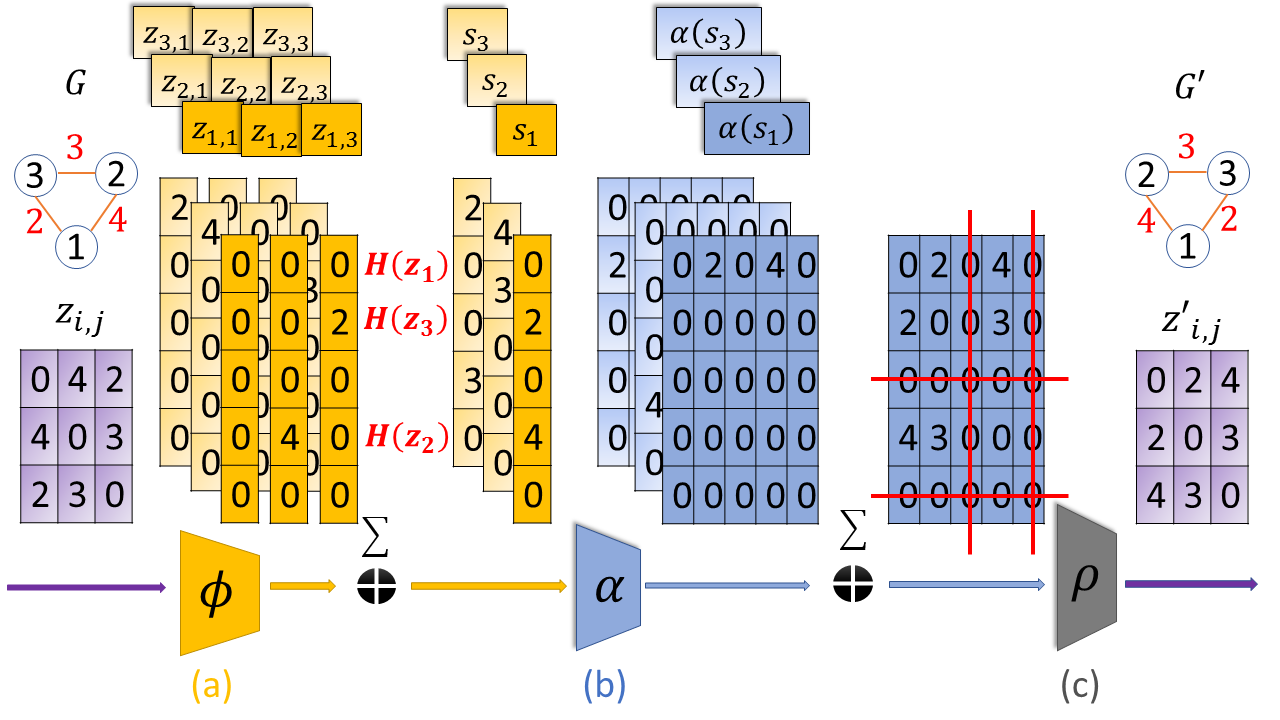}}
    \caption{{\small Illustration of the proof of Theorem 1 using a specific construction example. Here $H$ is a hash function of size $L$ = 5 such that $H(1)=1,H(3)=2,H(2)=4$, $G$ is a three-node input graph, and $\ve_{i,j}\in\reals$ are the pairwise features (in purple) of $G$. \textbf{(a)} $\phi$ is applied to each $\ve_{i,j}$. Each application yields a vector in $\reals^5$. The three dark yellow columns correspond to $\phi(\ve_{1,1})$, $\phi(\ve_{1,2})$ and $\phi(\ve_{1,3})$. Then, all vectors $\phi(\ve_{i,j})$ are summed over $j$ to obtain three $\ss_i$ vectors. \textbf{(b)} $\alphav$'s (blue matrices) are an outer product between $\onehotvector{H(\ve_i)}$ and $\ss_i$ resulting in a matrix of zeros except one row. The dark blue matrix corresponds for $\alpha(\ve_1,\ss_1)$. \textbf{(c)} All $\alpha$'s are summed to a $5 \times 5$ matrix, isomorphic to the original $\ve_{i,j}$ matrix.}}
    \label{hash_graph}
    \end{center}
\end{figure}

\subsection{Characterizing Permutation Invariance: Attention}
Attention is a powerful component which naturally can be introduced into our GPI model. We now show how attention can be introduced in our framework.
Formally, we learn attention weights for the neighbors $j$ of a node $i$, which scale the features $\ve_{i,j}$ of that neighbor. We can also learn different attention weights for individual features of each neighbor in a similar way.

\newcommand{\zzi}{{\zz_i}}
\newcommand{\zzj}{{\zz_j}}
\newcommand{\zzn}{{\zz_n}}
\newcommand{\zzio}{{\zz_{i,1}}}
\newcommand{\zzin}{{\zz_{i,n}}}
\newcommand{\zzij}{{\zz_{i,j}}}

Let $w_{i,j}\in \reals$ be an attention mask specifying the weight that node $i$ gives to node $j$:
\begin{equation}
\label{eq:att_softmax}
	w_{i,j}(\zzi, \zzij, \zzj) = {e^{\beta(\zzi,\zzij,\zzj)}} / {\sum_{t} e^{\beta(\zzi,\zz_{i,t},\zz_{t})}}
\end{equation}
where $\beta$ can be any scalar-valued function of its arguments (e.g., a dot product of $\zz_i$ and $\zz_j$ as in standard attention models). \ignore{Alternative}
To introduce attention we wish $\alphav\in \reals^e$ to have the form of weighting $w_{i,j}$ over neighboring feature vectors $\ve_{i,j}$, namely, $\alphav = \sum_{j \neq i} {w}_{i,j} \zz_{i,j}$. 

To achieve this form we extend $\phiv$ by a single entry, defining $\phiv\in\reals^{e+1}$ (namely we set $L=e+1$) as $\phiv_{1:e}(\ve_i, \ve_{i,j}, \ve_j) =e^{\beta(\ve_i, \ve_{i,j}, \ve_j)}\ve_{i,j}$ (here $\phiv_{1:e}$ are the first $e$ elements of $\phiv$) and $\phiv_{e+1}(\ve_i, \ve_{i,j}; \ve_j) =e^{\beta(\ve_i, \ve_{i,j}, \ve_j)}$.
We keep the definition of $\ss_i=\sum_{j\neq i} \boldsymbol{\phi}(\ve_{i}, \vescalar_{i,j}, \ve_{j})$.
Next, we define $\alphav=\frac{\ss_{i,1:e}}{\ss_{i,e+1}}$ and substitute $\ss_i$ and $\phiv$ to obtain the desired form as attention weights $w_{i,j}$ over neighboring feature vectors $\ve_{i,j}$: 
\begin{align*}
\alphav(\zz_i, \ss_i) \!=\! \frac{\ss_{i,1:e}}{\ss_{i,e+1}}
\!=\! \sum_{j \neq i} \frac{e^{\beta(\zzi,\zzij,\zzj)}\ve_{i,j}}{\sum_{j \neq i} e^{\beta(\zzi,\zzij,\zzj)}}  \!=\! \sum_{j \neq i} {w}_{i,j} \zz_{i,j}
\end{align*}
A similar approach can be applied over $\alphav$ and $\rho$ to model attention over the outputs of $\alphav$ as well (graph nodes).

\ignore{
\subsection{Details of the Scene Graph Prediction Model}
We further describe our Scene Graph Prediction (SGP) model described in section 5.2: "Scene-Graph Classification". Recall that the model is composed of three sub-networks $\phi$, $\alpha$, $\rho$ and receives three types of inputs:   entity features $\ve_i$, relation features $\ve_j$ and entity-entity pairwise features $\ve_{i,j}$ which is the relation features between entity $i$ and entity $j$. 

The entity features include: (1) entity confidence learned by the baseline model. (2) bounding box given as a \texttt{(x, y, width, height)}. (3) number of entities with smaller/bigger size. (4) number of entities to the left/right/above/below. (5) number of entities with higher/lower confidence. (6) For Linguistic model only: word embedding of the most probable class. Word vectors were learned with \textsc{GloVe} from the ground-truth captions of Visual Genome.

The relation features include: (1) relation confidence learned by the baseline model. (2) For the Linguistic model only: word embedding of the most probable class. 

The $\phi$ network is implemented as a single fully-connected (FC) layer that outputs a vector of size 500. The $\alpha$ network is implemented as single FC layer that outputs a vector of size 500. The $\rho$ network is implemented as FC network with 3 hidden layers of size 500 and outputs for entity vector of size 150 (the entity probabilities vector) and relation vector of size 51 (the relation probabilities vector).
For the attention mechanism we further implement a network similar to $\phi$ and $\alpha$ that receives the same inputs, but outputs scores used for the attention mechanism.}

\subsection{Scene Graph Results}
In the main paper, we described the results for the two prediction tasks: SGCls and PredCls, as defined in section 5.2.1: "Experimental Setup and Results". To further analyze our module, we compare the best variant, \textsc{GPI: Linguistic}, per relation to two baselines: \citep{lang_prior} and \cite{sg_generation_msg_pass}. Table \ref{result_predicate_table}, specifies the  PredCls recall@5 of the 20-top frequent relation classes. The GPI module performs better in almost all the relations classes.

\begin{table}[ht!]
  \caption{Recall@$5$ of PredCls for the 20-top relations ranked by their frequency, as in  \citep{sg_generation_msg_pass}}\label{result_predicate_table}
  \vspace{0.0in}
  \begin{center}
  \begin{small}
  \begin{sc}
  \begin{tabular}{lllll}
  Relation & \citep{lang_prior} & \citep{sg_generation_msg_pass} & Linguistic\\
  \midrule
  on & \textbf{99.71}  & 99.25 & 99.3 \\
  has & 98.03  & 97.25 & \textbf{98.7} \\
  in & 80.38 & 88.30  & \textbf{95.9} \\
  of & 82.47 & 96.75  & \textbf{98.1} \\
  wearing & 98.47 & 98.23  & \textbf{99.6} \\
  near & 85.16 & \textbf{96.81}  & 95.4 \\
  with & 31.85 & 88.10  & \textbf{94.2} \\
  above & 49.19 & 79.73  & \textbf{83.9} \\
  holding & 61.50 & 80.67  & \textbf{95.5} \\
  behind & 79.35 & \textbf{92.32}  & 91.2 \\
  under & 28.64 & 52.73  & \textbf{83.2} \\
  sitting on & 31.74 & 50.17  & \textbf{90.4} \\
  in front of & 26.09 & 59.63  & \textbf{74.9} \\
  attached to & 8.45 & 29.58  & \textbf{77.4} \\
  at & 54.08 & 70.41  & \textbf{80.9} \\
  hanging from & 0.0 & 0.0  & \textbf{74.1} \\
  over & 9.26 & 0.0  & \textbf{62.4}  \\
  for & 12.20 & 31.71  & \textbf{45.1} \\
  riding & 72.43 & 89.72  & \textbf{96.1} \\
  \bottomrule
  \end{tabular}  \end{sc}  \end{small}  \end{center}
  \vskip -0.1in
\end{table}

\end{document}